\documentclass{article}

\usepackage[final,nonatbib]{neurips_2020}

\usepackage[utf8]{inputenc} 
\usepackage[T1]{fontenc}    
\usepackage{hyperref}       
\usepackage{url}            
\usepackage{booktabs}       
\usepackage{amsfonts}       
\usepackage{nicefrac}       
\usepackage{microtype}      

\usepackage{geometry}
\usepackage{amsmath, amssymb, amsthm}
\usepackage{thmtools, thm-restate}
\usepackage{mathtools}
\usepackage{enumitem}
\usepackage{xcolor}
\usepackage{diagbox}
\usepackage[font=small]{caption}
\usepackage{graphicx, subfig}
\usepackage{wrapfig}

\usepackage{multirow}
\usepackage[capitalise]{cleveref}
\usepackage[math]{cellspace}
    \cellspacetoplimit 4pt
    \cellspacebottomlimit 4pt
    
\usepackage{selectp}

\def\R{\mathbb{R}}

\def\A{\mathcal{A}}
\def\B{\mathcal{B}}
\def\PP{\mathcal{P}}
\def\D{\mathcal{D}}
\def\X{\mathcal{X}}

\DeclareMathOperator*{\Argmax}{Arg\,max}
\DeclareMathOperator*{\argmax}{arg\,max}

\DeclareMathOperator*{\E}{\mathbb{E}}

\newcommand{\ambar}[1]{{\color{blue} AP: #1}}

\newcommand{\rene}[1]{\marginpar{\color{red}\vspace{-5pt}\scriptsize RV: #1}}
\newcommand{\myparagraph}[1]{\textbf{#1.}}

\newcommand{\switch}[1]{{\color{red} #1}}
\renewcommand{\switch}[1]{\iffalse#1\fi}

\newtheorem{definition}{Definition}
\newtheorem{theorem}{Theorem}

\newenvironment{manualcorollary}[1]{%
  \manualcorollaryinner
}{\endmanualcorollaryinner}

\title{A Game Theoretic Analysis of Additive \\ Adversarial Attacks and Defenses}
\author{
Ambar Pal\\
Mathematical Institute for Data Science\\
Johns Hopkins University\\
\texttt{ambar@jhu.edu} \\
\And
René Vidal\\
Mathematical Institute for Data Science\\
Johns Hopkins University\\
\texttt{rvidal@jhu.edu} \\
}

\begin{document}

\maketitle

\begin{abstract}
Research in adversarial learning follows a cat and mouse game between attackers and defenders where attacks are proposed, they are mitigated by new defenses, and subsequently new attacks are proposed that break earlier defenses, and so on. However, it has remained unclear as to whether there are conditions under which no better attacks or defenses can be proposed. In this paper, we propose a game-theoretic framework for studying attacks and defenses which exist in equilibrium. Under a locally linear decision boundary model for the underlying binary classifier, we prove that the Fast Gradient Method attack and a Randomized Smoothing defense form a Nash Equilibrium. We then show how this equilibrium defense can be approximated given finitely many samples from a data-generating distribution, and derive a generalization bound for the performance of our approximation.
\end{abstract}

\section{Introduction} Neural network classifiers have been shown to be vulnerable to additive perturbations to the input, which can cause an anomalous change in the classification output. There are several \emph{attack} methods to compute such perturbations for any input instance which assume access to the model gradient information, e.g., Fast Gradient Sign Method \cite{Goodfellow:ICLR15} and Projected Gradient Method \cite{Madry:ICLR18}. In response to such additive attacks, researchers have proposed many additive \emph{defense} methods with varying levels of success, e.g., Randomized Smoothing \cite{Cohen:ICML19}.
However, it has been later discovered that a lot of these defenses are in turn susceptible to further additive attacks handcrafted for the particular defenses. 
This back and forth where attacks are proposed breaking previous defenses and then further defenses are proposed mitigating earlier attacks has been going on for some time in the community and there are several open questions to be answered. Can all defenses be broken, or do there exist defenses for which we can get provable guarantees? Similarly, does there always exist a defense against any attack, or are there attacks with provable performance degradation guarantees? Do there exist scenarios under which attackers always win, and similarly scenarios where defenders always win? Are there conditions under which an equilibrium between attacks and defenses exist?

In this work, we answer some of these questions in the affirmative in a binary classification setting with locally linear decision boundaries. Specifically, we find a pair of attack $A$ and defense $D$ such that if the attacker uses $A$, then there is no defense which can perform better than $D$, and vice versa. Our approach can be seen as a novel way to obtain both provable attacks and provable defenses, complementing recent advances on certifiable defenses in the literature (\cite{Wong:Arxiv17,Carlini:Arxiv17,Huang:ICCAIV17,Dutta:Arxiv17,Tsuzuku:NIPS18,Hein:NIPS17,Lecuyer:SP19,Li:18}).

To summarize, our contributions are as follows:
\begin{enumerate}[leftmargin=*]
\item We introduce a game-theoretic framework for studying equilibria of attacks and defenses on an underlying binary classifier.\switch{\rene{We don't do NNs in particular. Shall we just say binary classifier? \ambar{Done}}} In order to do so, we first specify the capabilities, i.e., the changes the attacker and defender are allowed to make to the input, the amount of knowledge the attacker and the defender have about each other, and formalize the strategies that they can follow.\switch{\rene{Isn't allowed changes the same as strategies they can follow? \ambar{The strategy they can follow is the method they follow to make the changes given any $x$ (which in our case is an element of $\PP_A$ or $\PP_D$). The allowed changes are at most $\epsilon$ perturbation to the input.}}} 

\item We show that the Fast Gradient Method attack and a Randomized Smoothing defense \footnote{Note that our randomized smoothing defense does \emph{not} sample from an isotropic gaussian distribution.} form a Nash Equilibrium under the assumption of a zero-sum game with locally linear decision boundary for the underlying binary classifier and full knowledge of the data-generating distribution.

\item We propose 
an optimization-based method to approximate the optimal defense given access to a finite training set of $n$ independent samples and we derive generalization bounds on the performance of this finite-sample approximation. Our bounds show that the approximation approaches the optimal defense at a fast rate of $O(\sqrt{\log n / n})$ with the number of samples $n$. \switch{\rene{$d$ is undefined. Wasn't is supposed to be $m$? \ambar{Right, $m$ is the dimension of the data points which is constant. Removed}}}
\end{enumerate}

The rest of the paper is organized as follows: In \cref{sec:setup} we describe the 
proposed game theoretic setup.
In \cref{sec:optimality} we state our main result showing the existence of an optimal attack and defense. This is followed by \cref{sec:approximation} where we propose an optimization method to approximate the optimal defense, and provide some experiments validating our methods and models. \cref{sec:approximation} presents a generalization analysis showing that the proposed approximation approaches the optimal defense at a fast rate. Finally we conclude in \cref{sec:conclusion} by putting our work into perspective with related work.

\section{A Game Theoretic Setup for Additive Adversarial Attacks and Defenses} 
\label{sec:setup}
We will denote a random variable with an upper-case letter, e.g., $X$, and a realization of a random variable with a lower-case letter, e.g., $x$. We will consider a binary classification task with data distribution $p_X$ defined over the input space $\X\subset\R^m$. \switch{\footnote{{\color{red}RV: Usually the space of inputs is one thing, a random variable in that space is another thing, and we talk about the distribution of the random variable, so if $X$ is the space the notation $p_X$ is weird to me. I left your writeup, mine is maybe more compact, but doesn't solve the issue.}{\color{blue} Changed. Random variable is $X$ now, domain is $\X$ and distribution is $p_X$}}} The true decision boundary corresponding to the discrete labels $\{-1, +1\}$ will be defined by the zero-contour of a classifier $f \colon \X \to \R$, i.e., $\{x \colon f(x) = 0\}$, and the label of each data point $x \in \X$ will be given by ${\rm sgn}(f(x))$. 

We will define a two-player, single-shot, simultaneous, zero-sum game between an attacker $A$ and a defender $D$. In this setting the attacker and defender are allowed to simultaneously make additive perturbations $a(x)$ and $d(x)$, respectively, for a given a data point $x$, i.e., both $A$ and $D$ submit their perturbation at the same time and the perturbed point is $x+a(x) + d(x)$. The size of each perturbation is limited to be at most $\epsilon$ in $\ell_2$ norm, i.e., for all $x\in \X$, $a(x), d(x) \in V$, where $ V:= \{v \colon \|v\|_2 \leq \epsilon \}$.  

The score $u_A$ assigned to $A$%
\switch{\footnote{{\color{red}A only, or $A$ and $D$? If both, then why not $u$ instead of $u_A$? Then one maximizes the utility, the other one minimizes it} {\color{blue} Separating the attacker and the defender's utilities makes it much easier in the proofs etc. to show what we are talking about. Secondly, in this particular case we have a zero-sum game so $u_A = -u_D$, but it need not be the case in general.}}}
is determined by whether the label of the data point $x$ has changed under a locally linear approximation of $f$ around $x$ after both the perturbations $a(x)$ and $d(x)$ are applied:
\begin{equation}
u_A(x, a(x), d(x)) = 	\begin{cases}
					+1 \quad \text{ if } {\rm sgn}(f_L(x)) \neq {\rm sgn}(f_L(x + a(x) + d(x))) \\
					-1 \quad \text{ otherwise. }
				\end{cases}
\end{equation}
In the above,  $f_L(x') = f(x) + \nabla f(x)^\top (x' - x)$ is a linear approximation of $f$ in the $2\epsilon$ neighbourhood of $x$. Similarly, the score $u_D$ assigned to $D$ is defined as the negative of the utility assigned to $A$, i.e., $u_D(x,a,d) = -u_A(x,a,d)$ for all $(x,a,d)$, thus making the game zero-sum. 

\begin{figure}[h]
\centering
\includegraphics[height=5cm]{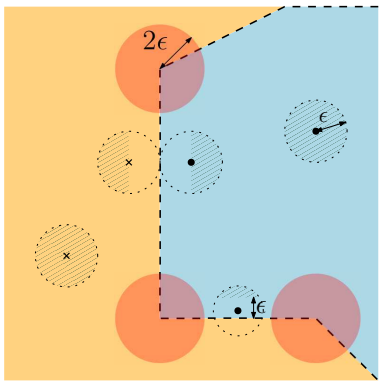}
\caption{The decision boundary is given by the dashed line. In order for our locally linear modeling assumption to hold, data-points should lie anywhere except in the red regions, i.e., within $2\epsilon$ distance to a half-plane intersection. The green shaded regions show the geometry of the robust sets $R(x)$, which follows from \cref{lem:geometry}. Observe that $R(x)$ becomes larger as $x$ moves farther away from the decision boundary.}
\label{fig:model}
\end{figure}
Note that the locally linear model assumption holds whenever the data distribution places no mass in regions of space that are less than $2\epsilon$ distance away from curved parts of the decision boundary. Concretely, for a neural network with ReLU activations, the assumption holds whenever none of the data points lie very close to the \emph{intersection} of 2 or more hyperplanes which make up the classification boundaries. This is reasonable as such regions form a set of measure zero in the input space. We refer the reader to Sec.~E of the Appendix for a more detailed discussion about the validity of such locally-flat boundary assumptions for deep neural networks.

A deterministic strategy for the attacker consists of choosing a function $a \colon \X \to V$ which dictates the perturbation $a(x)$ that is made by $A$ for the point $x \in \X$. Hence, the action space, i.e., the set of all deterministic strategies that can be followed by the attacker, is the function set $\A_A = \{ a | a \colon \X \to V \}$. 


A deterministic strategy for the defender consists of the set of all constant functions, i.e., functions which take the same perturbation direction for each point $x \in \X$. Hence the action space $\A_D$ for the defender consists of the function set $\A_D = \{d_v | v \in V, d_v \colon \X \to V \text{ s.t. } \forall x \in \X, \ d_v(x) = v \}$. Since each deterministic strategy for the defender can be uniquely with a point $v \in V$, the reader can think of $\A_D$ as $V$ for ease of understanding. \switch{\footnote{\color{red}RV: Do we really need $\A_A$ and $\A_D$, or can't we just have $\A$ and $\D$? \ambar{I felt $A$ for attacker and $\A$ for attacker's deterministic strategies seemed confusing} Also, the definition of $\A_D$ is so convoluted. Why not just $\A_D = V$? \ambar{Actually the definition stemmed from not having the restriction earlier leading to the defender always winning, and then later the constraint was added for reasons mentioned. Note added.}}} The reasons for constraining the set of strategies for the defender are twofold: first, we want to model existing literature in adversarial attacks where defenders typically follow a single strategy (e.g., smooth input, quantize input) agnostic of the test data point. Second, this restriction captures the fact that defenders are typically given the adversarially perturbed input $x  + a(x)$ and are expected to fix it without knowing the original label or original data point $x$ (and hence have to use a strategy that is agnostic to the relation between the data point and its label). 

In reality, attackers and defenders can choose even randomized strategies, where the function $a$ or $d$ is sampled according to some probability density on the set of allowed deterministic strategies. Such a randomized strategy $s_A$ for the attacker is specified by a density\switch{\rene{Probably $p_A$, not $p_a$, no?. \ambar{You're right, it looks better. Changed.}}}  $p_A \in \PP(\A_A)$,\switch{\footnote{\color{red}RV: for $X$ you used $p_X$. Now you use $p_A \in \PP$. Was it $p_X \in \PP(X)$ before? \ambar{Yes} Also, by distribution you mean pdf, cdf? \ambar{pdf. Distribution changed to density everywhere.} Finally, since the attacker chooses a random function, $\PP(\A_A)$ is the distribution of a process (like a Gaussian process), right? \ambar{Yes}}} where we define $\PP( \A_A)$ to be the set of all probability densities over $\A_A$. Similarly, a randomized strategy $s_D$ for the defender is specified by a density $p_D \in \PP(\A_D)$.\switch{\rene{$p_D$? \ambar{Changed}}} Given a pair of strategies $(s_A, s_D)$, we define the utility function of the attacker, $\bar u_A \colon \PP(\A_A) \times \PP(\A_D) \to \R$ as follows:\switch{\rene{Should quantities be random variables here, like $a(X)$? \ambar{No, everything is realization here}}}
\begin{equation}
\bar u_A (s_A, s_D) = \E_{x \sim p_X, a \sim p_A, d \sim p_D} u_A(x, a(x), d(x)) 
\end{equation}
Note that since the game is zero-sum, the corresponding utility function $\bar u_D$ for the defender is just the negative of that for the attacker. To complete the setup, we specify that both the attacker and defender have \emph{perfect knowledge} about the (possibly randomized) strategy that the other follows, and have access to the linear approximation $f_L$ around any data point $x$. In summary, we have a white-box evasion attack scenario and aim to analyze preprocessing attacks and defenses (i.e. they preprocess the input before they are fed to the classifier) that make additive perturbations to the input. 

In the following section, we will further assume both $A$ and $D$ have access to the full data-generating distribution $p_X$ in order to derive the optimal attack and defense strategy. We will show later that this assumption is not needed in practice by constructing an approximation given finitely many samples from $p_X$. We will then prove that this approximation approaches the true optimum at a fast rate. 

\section{A Characterization of Optimal Attack and Defense Strategies} \label{sec:optimality}

In this section, we will show in \cref{th:fgsm-smoothing-nash-cont} that the FGM attack and Randomized Smoothing defense exist in a Nash Equilibrium, i.e., if $A$ follows the strategy defined by the FGM attack then $D$ cannot do better than following a randomized smoothing strategy, and vice-versa. In order to show this, we will first establish in \cref{lem:fgsm-br-cont} that the FGM attack is the best response to any defense in our setting, and then show in \cref{lem:smoothing-br-cont} that randomized smoothing is a best response to FGM. Together, these lemmas would lead to our main result. Before stating our results, we begin by defining the robust set.

\begin{definition}[\bf Robust set] \label{def:robust-set}
The robust set $R(x) = \{ v \colon v \in V \text{ s.t. } \forall v' \in V \ u_A(x, v', v) = -1\}$ of a point $x\in\X$ is the subset of allowed perturbations $v \in V$ such that if the defender plays $v$ at $x$, then the attacker always gets an utility of $-1$, i.e., the least possible utility, no matter what she plays.  
\end{definition}

\cref{lem:geometry} shows that $R(x)$ is the intersection of a half-plane with $V$, as illustrated by the green shaded regions in \cref{fig:model}. Observe that when $x$ is far from the decision boundary, the robust set is equal to $V$.

\begin{restatable}[\bf Geometry of the robust set]{lemma}{geometry} \label{lem:geometry}
For any $x \in \X$, the robust set $R(x)$ is given by
\begin{equation}
R(x) = \{v \colon \text{sgn}(f(x))(f(x) + \nabla f(x)^\top v) - \epsilon \|\nabla f(x)\| \geq 0 \} \cap \{v \colon \|v\|_2 \leq \epsilon\}.
\end{equation}
\end{restatable}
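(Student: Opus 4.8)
The plan is to unfold the definition of $R(x)$ using the locally linear model and then reduce the universal quantifier over the attacker's move to a single worst-case perturbation via Cauchy--Schwarz. First I would record two facts: since $f_L(x) = f(x)$ we have $\text{sgn}(f_L(x)) = \text{sgn}(f(x))$, and for any attacker move $v'$ and defender move $v$ the linear approximation gives $f_L(x + v' + v) = f(x) + \nabla f(x)^\top(v' + v)$. Writing $s := \text{sgn}(f(x))$, the event $u_A(x,v',v) = -1$ (the predicted label does \emph{not} flip) is exactly $s\,(f(x) + \nabla f(x)^\top(v'+v)) \ge 0$. By \cref{def:robust-set}, $v \in R(x)$ iff $v \in V$ and this inequality holds simultaneously for every $v' \in V$.

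The key step is to replace ``for all $v' \in V$'' by a single worst-case condition. Since $f(x)$ and $\nabla f(x)^\top v$ do not depend on $v'$, I would isolate the $v'$-dependent term $s\,\nabla f(x)^\top v'$ and note that the constraint holds for all $v'$ iff it holds at the minimizer, i.e. iff $\min_{\|v'\|_2 \le \epsilon} s\,\nabla f(x)^\top v' \ge -\big(s f(x) + s\,\nabla f(x)^\top v\big)$. By Cauchy--Schwarz, $\min_{\|v'\|_2 \le \epsilon} s\,\nabla f(x)^\top v' = -\epsilon\,\|\nabla f(x)\|$, attained at $v' = -s\,\epsilon\,\nabla f(x)/\|\nabla f(x)\|$, which indeed lies in $V$ (the compactness of $V$ guarantees the minimum is attained). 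Substituting this worst case, the ``for all $v'$'' requirement collapses to $s\,(f(x) + \nabla f(x)^\top v) - \epsilon\,\|\nabla f(x)\| \ge 0$, and intersecting with $v \in V = \{v \colon \|v\|_2 \le \epsilon\}$ yields precisely the claimed description of $R(x)$. As a consistency check, when $|f(x)| \ge 2\epsilon\,\|\nabla f(x)\|$ the half-plane constraint is vacuous and $R(x) = V$, matching the remark that points far from the boundary are fully robust; the degenerate case $\nabla f(x) = 0$ is also absorbed, since then the condition reduces to $|f(x)| \ge 0$.

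The main obstacle I anticipate is the tie-breaking / boundary convention, because the map $v' \mapsto \text{sgn}(\cdot)$ is discontinuous exactly where $f_L(x+v'+v) = 0$. The closed inequality $\ge 0$ in the statement corresponds to adopting the convention that the attacker receives utility $-1$ (the label is deemed unchanged) when the perturbed linear value is exactly zero; with the opposite convention one would obtain a strict inequality $> 0$. I would therefore fix this convention explicitly at the start so that the reduction ``$\forall v'$'' $\Leftrightarrow$ ``worst-case $v'$'' produces the closed half-space, and note that the distinction affects only the measure-zero boundary $\{v \colon s(f(x)+\nabla f(x)^\top v) = \epsilon\|\nabla f(x)\|\}$ and hence is immaterial to the downstream expectation-based utilities.
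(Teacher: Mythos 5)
Your proof is correct and takes essentially the same route as the paper: the paper's appendix in fact ends with a compact version of its own proof that is precisely your argument --- unfold \cref{def:robust-set} under the linear model, replace the universal quantifier over the attacker's $v'$ by a minimum over $B(0,\epsilon)$, and evaluate $\min_{\|v'\|_2\le\epsilon} \text{sgn}(f(x))\,\nabla f(x)^\top v' = -\epsilon\|\nabla f(x)\|$ via Cauchy--Schwarz (the paper's longer geometric argument, via distance to the hyperplane and the FGM direction as worst case, is the same computation in different clothing). Your explicit handling of the $\text{sgn}(0)$ tie-breaking convention at the boundary is a point of care the paper silently glosses over, but it does not change the substance.
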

\switch{\footnote{\color{red}RV: I wonder if this entire paragraph shouldn't be in the previous section. A figure accompanying the definition of $R(x)$ would also help. Alternatively, if it is not part of the setup, or of the proof of Lemma 1, then it may just make sense to define it when used, right before $\phi(v)$. \ambar{It's not part of the setup, so probably should not be in the previous section. The proof of Lemma 1 needs $R(x)$, so it should be before Lemma 1. I moved it a bit so as to not break the flow once we go into the attack description. Added a figure in the appendix, \cref{fig:model}}}}
Interestingly, the proof of the lemma shows that for a fixed $x$, in order for $v$ to achieve $u_A(x,v',v)=-1$ for all $v'\in V$, it is sufficient to ensure that $u_A(x,v',v)=-1$ when $v'$ is chosen according to the Fast Gradient Method (FGM). The FGM attack was proposed in \cite{Goodfellow:ICLR15} and makes the additive perturbation $a_\text{FGM}(x) = -\epsilon \frac{{\rm sgn} (f(x))}{\| \nabla f(x) \|_2} \nabla f (x)$. Note that the original attack was called Fast Gradient Sign Method, as it was derived for $\ell_\infty$ bounded perturbations. The same attack for $\ell_2$ bounded perturbations is called the FGM attack. Since this attack does not involve any randomness, the strategy $s_{\rm FGM}$ followed by the attacker in our framework places probability $1$ on the function $a_{\rm FGM}$.


\begin{restatable}[\bf FGM is a best-response to any defense]{lemma}{fgsm}
\label{lem:fgsm-br-cont}
For any strategy $s_D \in \PP(\A_D)$ played by the defender $D$, the strategy $s_\text{FGM} \in \PP(\A_A)$ played by the attacker $A$ achieves the largest possible utility against $s_D$, i.e., $\bar u_A(s_\text{FGM}, s_D) \geq \bar u_A(s_A, s_D)$ for all $s_A \in \PP(\A_A)$. \switch{\footnote{\color{red}RV: why $s'$ as opposed to $s_A$? \ambar{Changed}}}
\end{restatable}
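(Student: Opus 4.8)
The plan is to reduce the claim about randomized strategies to a \emph{pointwise} comparison and then settle that comparison with Cauchy--Schwarz. The key observation is that, since $u_A$ takes values in $\{-1,+1\}$ and $s_{\mathrm{FGM}}$ places all its mass on the single deterministic function $a_{\mathrm{FGM}}$, it suffices to show that for every fixed $x\in\X$ and every fixed defender perturbation $v\in V$,
\begin{equation}
u_A(x, a_{\mathrm{FGM}}(x), v) \geq u_A(x, a, v) \qquad \text{for all } a\in V .
\end{equation}
Granting this, the lemma follows by integrating against the strategies. For any $s_A=p_A$ and $s_D=p_D$, conditioning on $x$ and on the realized defense $d$ and using that every sampled $a$ obeys the pointwise bound, linearity of expectation gives
\begin{align*}
\bar u_A(s_A, s_D) &= \E_{x\sim p_X}\,\E_{d\sim p_D}\,\E_{a\sim p_A}\, u_A(x, a(x), d(x)) \\
&\leq \E_{x\sim p_X}\,\E_{d\sim p_D}\, u_A(x, a_{\mathrm{FGM}}(x), d(x)) = \bar u_A(s_{\mathrm{FGM}}, s_D),
\end{align*}
which is exactly the desired inequality.

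For the pointwise claim I would argue as follows. Because $u_A\in\{-1,+1\}$, the inequality is nontrivial only when its right-hand side equals $+1$, i.e.\ when some $a\in V$ flips the sign of the linear approximation. Using $f_L(x)=f(x)$ and $f_L(x+a+v)=f(x)+\nabla f(x)^\top(a+v)$, a flip means $\sgn(f(x))\neq \sgn\!\big(f(x)+\nabla f(x)^\top(a+v)\big)$. Assume without loss of generality $f(x)>0$ (the case $f(x)<0$ is symmetric, with the roles of $\min$ and $\max$ swapped). A flip then requires $\nabla f(x)^\top a < -f(x) - \nabla f(x)^\top v$. By Cauchy--Schwarz the left-hand side is minimized over $a\in V$ precisely at $a = -\epsilon\,\nabla f(x)/\|\nabla f(x)\|_2 = a_{\mathrm{FGM}}(x)$, attaining the value $-\epsilon\|\nabla f(x)\|_2$. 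Hence if \emph{any} $a\in V$ produces a flip, then so does $a_{\mathrm{FGM}}(x)$, giving $u_A(x,a_{\mathrm{FGM}}(x),v)=+1$ and establishing the pointwise inequality. Notably this is the same minimality that underlies \cref{lem:geometry}, so the two results share a common geometric core.

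The step requiring the most care is the degenerate boundary case. The Cauchy--Schwarz argument shows FGM minimizes $\nabla f(x)^\top a$, but the flip condition is a strict inequality against the threshold $-f(x)-\nabla f(x)^\top v$, and the $\sgn$ convention at $0$ (whether $f_L=0$ counts as a flip) must be applied consistently across $a$ and $a_{\mathrm{FGM}}$. The definition of $a_{\mathrm{FGM}}$ also presupposes $\nabla f(x)\neq 0$. Both issues are handled by the locally linear modeling assumption: the data distribution places no mass within $2\epsilon$ of curved parts of the boundary, so $\nabla f(x)\neq 0$ on the support and the ties where $f(x)+\nabla f(x)^\top(a+v)=0$ occur on a measure-zero set that does not affect the expectations above. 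Alternatively one may resolve ties in the attacker's favor, which only strengthens the bound; either way the reduction to expectations is then routine.
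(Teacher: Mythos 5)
Your proof is correct, and its skeleton—establish that $a_{\mathrm{FGM}}$ is pointwise optimal for every realization $(x,v)$, then pass to expectations by monotonicity—matches the paper's proof, which likewise bounds $\bar u_A(s_A,s_D)$ by $\E_{x\sim p_X, d\sim s_D}\sup_{a\in\A_A} u_A(x,a(x),d(x))$ and then shows FGM attains that supremum. Where you genuinely diverge is in how the pointwise step is settled. The paper routes it through the robust set: it invokes \cref{def:robust-set} to say the per-point supremum is $-1$ if $d(x)\in R(x)$ and $+1$ otherwise, and then argues FGM attains the $+1$ case by an informal geometric rotation argument (``one can rotate $v_a$ towards the FGM ray while maintaining the sign difference''), finally expressing both sides as $\Pr[\bar E]-\Pr[E]$ for the event $E=\{(x,d)\colon d(x)\in R(x)\}$. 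You instead prove pointwise dominance directly: a sign flip under the linear model is the threshold condition $\sgn(f(x))\,\nabla f(x)^\top a \leq -|f(x)| - \sgn(f(x))\nabla f(x)^\top v$, and Cauchy--Schwarz shows $a_{\mathrm{FGM}}(x)$ exactly minimizes the left-hand side over the ball, so any flip achievable by some $a\in V$ is achievable by FGM. This is tighter and arguably more rigorous than the paper's rotation argument (it is, as you note, the same computation the paper uses in its short alternative proof of \cref{lem:geometry}), and your explicit treatment of ties and of $\nabla f(x)=0$ exceeds the care the paper takes; note, though, that the tie discussion is not even needed, since FGM attains the exact minimum over the compact ball, so strict versus non-strict thresholds resolve identically. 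What your leaner route gives up is the byproduct the paper's detour produces: the closed-form value $\bar u_A(s_{\mathrm{FGM}},s_D)=\Pr[\bar E]-\Pr[E]$ in terms of robust-set membership, which the paper reuses in the proof of \cref{lem:smoothing-br-cont} and to read off the equilibrium utility $1-2\phi(v^*)$ in its discussion of implications.
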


Having established the optimality of the FGM attack, we now turn our attention to finding an optimal defense strategy. To that end, we define an instance of the Randomized Smoothing defense proposed in \cite{Cohen:ICML19} that will be shown to be a best-response to FGM. For any perturbation $v \in V$, we define $\phi(v)$ to be the measure of the 
set of points in $\X$ whose robust sets cointain $v$:\switch{\footnote{\color{red}RV: Is it $p(x)$ or $p_X(x)$? Is it $S(x)$ or $R(x)$? Also, the statement that precedes the equation is very confusing. Do you mean ``we define $\phi(v)$ as the probability that $v$ is contained in a robust set" or ``we define $\phi(v)$ as the probability that a robust set contains $v$" or ``we define $\phi(v)$ as the measure of the subset of $X$ such that the robust set of a point in this subset contains $v$." \ambar{Aren't all the statements equivalent? But yes, specifically I mean the last statement. Changed.}}}
\begin{equation}
\phi(v) = \int_{\X} \mathbf{1}[R(x) \ni v] p_X(x) dx 
\label{eq:phi}
\end{equation}
To define $s_\text{SMOOTH}$ we need to specify the probability distribution $p_\text{SMOOTH} \in \PP(\A_D)$. The idea is to sample uniformly from the set of maximizers of $\phi$, i.e., $V^* = \{ v^* : \phi(v^*) \geq \phi(v), ~~ \forall v\in V\}$. \switch{\footnote{\color{red}RV: Is $V^* = \{ v^* : \phi(v^*) \geq \phi(v), ~~ \forall v\in V\}$. Can't we just say it is the set of maximizers of $\phi$? Can we call it the set of most robust directions? Or the set of safest directions? \ambar{Set of maximizers is much clearer. Changed.}}} Accordingly, our defense strategy $s_\text{SMOOTH}$ samples from a uniform distribution over the set of functions $F^* = \{d_{v^*} \colon v^* \in V^*\} \subseteq \A_D$, where recall $d_v \colon \X \to V$ is the constant function defined as $\forall x \in \X \ d_v(x) = v$. 

\begin{restatable}[\bf Randomized Smoothing is a best-response to FGM]{lemma}{smooth} \label{lem:smoothing-br-cont}
The strategy $s_\text{SMOOTH}$ achieves the largest possible utility for the defender against the attack $s_\text{FGM}$ played by the attacker, i.e., for any defense $s_D \in \PP(\A_D)$ we have $\bar u_D(s_\text{FGM}, s_\text{SMOOTH}) \geq \bar u_D(s_\text{FGM}, s_D)$.
\end{restatable}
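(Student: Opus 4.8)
The plan is to reduce the maximization over all randomized defenses to a pointwise maximization of the single function $\phi$, by showing that against the fixed, deterministic FGM attack the defender's expected utility is an affine, increasing function of $\phi$ evaluated at the defender's choice of direction.

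The first and crucial step is the pointwise characterization: for every $x \in \X$ and every $v \in V$, one has $v \in R(x)$ if and only if $u_A(x, a_\text{FGM}(x), v) = -1$. The forward direction is immediate from \cref{def:robust-set} by specializing $v'$ to $a_\text{FGM}(x)$. The reverse direction is exactly the observation recorded just after \cref{lem:geometry} (and established inside its proof, cf. \cref{lem:fgsm-br-cont}): since the FGM direction is the attacker's pointwise-optimal perturbation, if even FGM fails to flip the label at $x$ against the defense $v$, then no $v' \in V$ can flip it, so $v \in R(x)$.

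Using this, I would compute the defender's utility against FGM for an arbitrary deterministic defense $d_v$. Because the attack is deterministic and the game is zero-sum,
\begin{equation}
\bar u_D(s_\text{FGM}, d_v) = -\E_{x \sim p_X}\, u_A(x, a_\text{FGM}(x), v) = \E_{x \sim p_X}(2\cdot \mathbf{1}[v \in R(x)] - 1) = 2\phi(v) - 1,
\end{equation}
where the middle equality applies the pointwise fact ($u_A = -1$ exactly when $v \in R(x)$) and the last equality is the definition of $\phi$ in \cref{eq:phi}. By linearity of expectation over the defender's randomization, any strategy $s_D \in \PP(\A_D)$, viewed as a density $p_D$ over $v \in V$ (recall $\A_D$ is identified with $V$), then satisfies $\bar u_D(s_\text{FGM}, s_D) = 2\,\E_{v \sim p_D}\phi(v) - 1$. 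Maximizing over $s_D$ therefore amounts to maximizing $\E_{v \sim p_D}\phi(v)$, which is bounded above by $\max_{v \in V}\phi(v) = \phi(v^*)$ for any $v^* \in V^*$, with equality whenever $p_D$ is supported on $V^*$. Since $s_\text{SMOOTH}$ samples uniformly over $F^* = \{d_{v^*} : v^* \in V^*\}$, it attains the bound, giving $\bar u_D(s_\text{FGM}, s_\text{SMOOTH}) = 2\phi(v^*) - 1 \geq \bar u_D(s_\text{FGM}, s_D)$ for all $s_D$.

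I expect the main obstacle to be the first step, namely rigorously justifying the equivalence $v \in R(x) \iff u_A(x, a_\text{FGM}(x), v) = -1$, i.e., that defending only against the FGM direction is as good as defending against every $v' \in V$; once this is in hand, the remainder is a short affine computation plus an elementary argument that an expectation is maximized by concentrating mass on maximizers. A minor technical point to confirm is that $V^*$ is nonempty, so that $s_\text{SMOOTH}$ is well defined; this follows from compactness of $V$ together with upper semicontinuity of $\phi$ (each $\mathbf{1}[v \in R(x)]$ is the indicator of the closed convex set $R(x)$, so $\phi$ is upper semicontinuous by Fatou's lemma), whence the supremum is attained.
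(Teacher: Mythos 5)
Your proposal is correct and takes essentially the same route as the paper's proof: both reduce the defender's utility against $s_\text{FGM}$ to $2\,\E_{v \sim p_D}\phi(v) - 1$ (using the fact, established in the proof of \cref{lem:fgsm-br-cont}, that against FGM the defender wins at $x$ exactly when the defense direction lies in $R(x)$), and then bound this expectation by its maximum $2\phi(v^*)-1$, which $s_\text{SMOOTH}$ attains. Your added justification that $V^*$ is nonempty (compactness of $V$ plus upper semicontinuity of $\phi$) is a rigor point the paper leaves implicit, but it does not change the argument's structure.
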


Note that \cref{lem:fgsm-br-cont} is a stronger result than we need for our further analysis, and we will only be using the following implication of \cref{lem:fgsm-br-cont}:\switch{\footnote{\color{red}Why not state as a corollary? \ambar{Done}}}
\begin{manualcorollary}{2.1}[\bf FGM is a best-response to Randomized Smoothing] \label{cor:fgsm-br-cont}
The strategy $s_\text{FGM} \in \PP(\A_A)$ played by the attacker $A$ achieves the largest possible utility against $s_\text{SMOOTH}$, i.e., $\bar u_A(s_\text{FGM}, s_\text{SMOOTH}) \geq \bar u_A(s_A, s_\text{SMOOTH})$ for all $s_A \in \PP(\A_A)$.
\end{manualcorollary}

As a consequence of \cref{cor:fgsm-br-cont} and \cref{lem:smoothing-br-cont} we have established our main result, as follows:
\begin{theorem}[\bf (FGM, Randomized Smoothing) form a Nash Equilibrium] \label{th:fgsm-smoothing-nash-cont}
Neither player gains utility by unilaterally deviating when $A$ plays $s_\text{FGM}$ and $D$ plays $s_\text{SMOOTH}$, i.e., $\forall s_A\in\PP(\A_A), s_D \in \PP(\A_D)$, we have $\bar u_A(s_\text{FGM}, s_\text{SMOOTH}) \geq \bar u_A(s_A, s_\text{SMOOTH})$ and $\bar u_D(s_\text{FGM}, s_\text{SMOOTH}) \geq \bar u_D(s_\text{FGM}, s_D)$.
\end{theorem}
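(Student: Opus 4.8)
The plan is to simply unpack the definition of a Nash Equilibrium into its two defining inequalities and recognize each one as a result that has already been established. A pair of strategies $(s_\text{FGM}, s_\text{SMOOTH})$ is a Nash Equilibrium precisely when neither player can improve its own utility by deviating while the opponent holds fixed at the equilibrium strategy. For the attacker this means $\bar u_A(s_\text{FGM}, s_\text{SMOOTH}) \geq \bar u_A(s_A, s_\text{SMOOTH})$ for every $s_A \in \PP(\A_A)$, and for the defender it means $\bar u_D(s_\text{FGM}, s_\text{SMOOTH}) \geq \bar u_D(s_\text{FGM}, s_D)$ for every $s_D \in \PP(\A_D)$. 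So the entire task reduces to verifying these two inequalities.

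The first inequality is exactly the content of \cref{cor:fgsm-br-cont}: it asserts that $s_\text{FGM}$ is a best response to the specific defense $s_\text{SMOOTH}$, which is all that the attacker-side Nash condition demands. I would invoke it directly. Note here the appeal to the corollary rather than to the stronger \cref{lem:fgsm-br-cont}: the full lemma shows FGM is optimal against \emph{every} defense, but for the equilibrium certificate we only need optimality against the one fixed strategy $s_\text{SMOOTH}$, so the corollary is the right-sized statement to cite. The second inequality is verbatim the conclusion of \cref{lem:smoothing-br-cont}, which states that $s_\text{SMOOTH}$ attains the largest defender utility against the fixed attack $s_\text{FGM}$. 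I would invoke it directly as well.

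Combining the two gives both equilibrium inequalities simultaneously, which by definition establishes that $(s_\text{FGM}, s_\text{SMOOTH})$ is a Nash Equilibrium, completing the proof. The one point worth stating carefully is that a Nash Equilibrium requires only \emph{mutual best response against the equilibrium profile}, not best response against all opponent strategies; matching this to the two cited results is what makes the argument immediate rather than requiring a global min-max or saddle-point computation. Consequently there is essentially no genuine obstacle inside this theorem itself—the real work is front-loaded into \cref{lem:fgsm-br-cont} and \cref{lem:smoothing-br-cont}, whose proofs must respectively pin down the optimality of the gradient-aligned perturbation and the optimality of sampling uniformly from the maximizers $V^*$ of $\phi$. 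Given those lemmas, the theorem is a one-step consequence, so the only care needed is to ensure the quantifiers and the attacker/defender sign conventions (recall $u_D = -u_A$) align exactly with the two inequalities in the definition.
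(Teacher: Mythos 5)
Your proposal is correct and matches the paper's own argument exactly: the paper likewise derives \cref{th:fgsm-smoothing-nash-cont} immediately by combining \cref{cor:fgsm-br-cont} (attacker-side inequality) with \cref{lem:smoothing-br-cont} (defender-side inequality), with all substantive work front-loaded into those results. Your remark that only best response against the fixed equilibrium profile is needed---hence the corollary suffices rather than the full strength of \cref{lem:fgsm-br-cont}---is precisely the structure the paper uses.
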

\switch{{\color{blue} [AP: The next paragraph lists two important broad implications of this work, but at the same time also gives away two important directions \emph{we} are working on. I am unsure if it should be included.]}}

\myparagraph{Implications} At this point we will pause to note some implications of \cref{th:fgsm-smoothing-nash-cont}. 
\begin{enumerate}[leftmargin=*]
\item \emph{Theoretical insight:}
First, \cref{th:fgsm-smoothing-nash-cont} gives us a new theoretical insight into randomized-smoothing. Specifically, one should select the smoothing distribution according to the classifier $f$ to obtain an optimal defense, instead of sampling from the rotationally symmetric Gaussian distribution, which completely ignores the effect of the classifier. 

\item \emph{Provable attacks:} 
Second, \cref{th:fgsm-smoothing-nash-cont} shows that in some settings 
some attacks are \emph{optimal} in the sense that they
will perform better than any alternative regardless of the defense that is employed. This motivates the study of \emph{provable attacks}, something that has been largely ignored by the community which focusses a lot on \emph{provable defenses}. 

\item \emph{Winner takes all:} Third, we see from the proofs of \cref{lem:fgsm-br-cont,lem:smoothing-br-cont} that the equilibrium utility obtained by the equilibrium attacker is $1 - 2\phi(v^*)$, which shows that whenever the classification boundaries are such that $\phi(v^*) = 0$, the attacker will always win, i.e., obtain a utility of $1$ over the entire dataset. Similarly, the utility obtained by the equilibrium defender is $2\phi(v^*) - 1$, meaning that the defender wins completely whenever the classification boundaries are such that $\phi(v^*) = 1$. Relating this to the widely used metric \emph{robust accuracy}, the above is a characterization of cases where the robust accuracy obtained by the best possible defense is $0\%$ and $100\%$ respectively.
\end{enumerate}

As we saw in this section, we need access to the full data-generating distribution $p_X$ in order to compute $s_\text{smooth}$, which is an unreasonable assumption in practice. Hence, in the following section we will demonstrate how one can approximate $s_\text{smooth}$ given access to finitely many samples from $p_X$. 

\section{Approximation Properties: How to compute the optimal defense?} \label{sec:approximation}
As we saw in \cref{sec:optimality}, the optimal defense relies on the knowledge of the subset of perturbation directions that maximize $\phi$, $V^*$, which in turns depends on the distribution $p_X$ of the input data. Given $n$ i.i.d. samples $X_1, X_2, \ldots, X_n \sim p_X$, we define the finite-sample approximation of $\phi$ as:
\begin{align}
\phi_n(v) = \frac{1}{n} \sum_{i = 1}^{n} \mathbf{1}[v \in R(X_i)].\switch{\footnote{\color{red}Should $x$ be $X_i$? \ambar{Yes, fixed}}}
\end{align}
A straightforward modification of the proof of \cref{lem:smoothing-br-cont} shows that the same result can be obtained even if $p_\text{smooth}$ places all its mass on a single element of $V^*$. That is, an optimal defense can also be achieved by a deterministic strategy. Hence, our goal will now be to solve the following problem:
\begin{equation}
v_n^* = \max_v \phi_n(v) \text{ subject to } \|v\|_2 \leq \epsilon . 
\label{eq:appx}
\end{equation}

Recall from \cref{lem:geometry} that the robust set at $x_i$ can be written as $R(x_i) = \{v \in V : c_i^\top v + b_i \geq 0\}$, where $c_i = \text{sgn}(f(x_i)) \nabla f(x_i)$ and $b_i = |f(x_i)| - \epsilon \|\nabla f(x_i)\|$. Therefore, we obtain the following equivalent version of \cref{eq:appx}:
\begin{align}
\max_v \frac{1}{n} \sum_{i = 1}^{n} \mathbf{1}[c_i^\top v + b_i \geq 0] \text{ subject to } \|v\|_2 \leq \epsilon .\label{eq:appx2}
\end{align}

Observe that the objective in \eqref{eq:appx2} takes values in $\{0, \tfrac{1}{n}, \tfrac{2}{n}, \dots, 1\}$ and it is equal to $1$ iff there is a $v$ with $\|v\|\leq \epsilon$ such that $c_i^\top v + b_i \geq 0$ for all $i=1,\dots, n$. Trivially, this happens if $b_i \geq 0$ for all $i=1,\dots, n$, in which case we can choose $v=0$, meaning that no defense is needed. By inspection, we see that $b_i\geq 0$ when $|f(x_i)|$ is large, meaning that the classifier is confident about its prediction, and $\|\nabla f(x_i)\|$ is small, meaning that the response of the classifier is not very sensitive to input perturbations. This is very consistent with our intuition that defenses are needed when the classifier is not very confident (small $|f(x_i)|$) or its response is sensitive to input perturbations (large $\|\nabla f(x_i)\|$). 


To solve the optimization problem in \eqref{eq:appx2}, consider the case where there is a single sample, i.e., $n = 1$. 
In this case, if the half-space $\mathcal{H} = \{v \colon c_1^\top v + b_1 \geq 0\}$ does not intersect the hypersphere~$B(0,\epsilon)$,~then any $v^*\in V$ is a solution to \eqref{eq:appx2} with $\phi(v^*) = 0$. Else, if $\mathcal{H}$ intersects the hypersphere, a solution is given by the projection of the origin onto $\mathcal{H}$, which gives $\phi(v^*) = 1$ (see left panel of \cref{fig:regions} for illustration). When $n = 2$, there are up to two hyperplanes, which  divide the space into at most 4 regions. When no half-space intersects the hypersphere, any $v^*\in V$ is an optimal solution and $\phi(v^*)=0$. When only one half-space intersects the hypersphere, as before an optimal solution is given by the projection of the origin onto the half-space, which gives $\phi(v^*) = 1/2$. When both half-spaces intersect the hypersphere, but the half-spaces intersect each other outside the hypersphere, $v^*$ can be the projection of the origin onto either half-space. It is only when both half-spaces intersect inside the hypersphere we have $\phi(v^*)=1$ and a maximizer is given by the projection of the origin onto the intersection of both half-spaces (see right panel of \cref{fig:regions} for illustration). However, as $n$ increases, the number of regions grows exponentially in $n$, rendering such a direct region-enumeration intractable. We thus follow an optimization-based approach to find an approximate maximizer of \eqref{eq:appx2}. More specifically, we use projected gradient descent on $v$ with the constraint set $\|v\|_2 \leq \epsilon$ to solve the optimization problem in \eqref{eq:appx2} and obtain $\widehat v_n^*$. Additionally, as the gradients of the indicator function are not very useful, we use the relaxation $\mathbf{1}[\alpha \geq 0] \geq \min ( \max (0, \alpha), 1)$ and optimize the RHS. 

\begin{table}[h]
  \caption{Mean (Variance) of the approximate accuracy computed over binary classifications tasks on MNIST and FMNIST corresponding to ${10}\choose{2}$ pairs of classes. }
  \centering
  \begin{tabular}{@{}c@{\;\;}c@{\;\;}c@{\;\;}c@{}}
    \toprule
    Attack & Defense & MNIST (\%) & FMNIST (\%)\\
    \midrule
    - & - & 99.9 (0.0) & 99.9 (0.1)\\
    FGM & - & 53.3 (10.0) & 47.4 (5.1) \\
    FGM & SMOOTH & 71.2 (14.2) & 67.4 (9.0) \\
    PGD & - & 71.9 (12.0) & 74.7 (7.3) \\
    PGD & SMOOTH & 94.0 (4.0) & 90.3 (8.5) \\
    \bottomrule
  \end{tabular}
   \label{tab:results}
\end{table}

Our experiments are conducted on the MNIST and FMNIST datasets restricted to two classes. We train a 4-layer convolutional neural network with ReLU activation functions for this binary classification task. The classification results are shown in \cref{tab:results}, from which we can draw two main conclusions: (1) If the defender uses the equilibrium defense, then the attacker gets the most reduction in  approximate accuracy \footnote{Approximate Accuracy is defined as the accuracy of the model obtained by linearizing the decision boundary in a $2\epsilon$-ball around data-points, thus satisfying our modelling assumption. A detailed description, as well as more experimental details can be found in Sec.~D of the Appendix.} when using the equilibrium attack, as using any other attack improves the performance of the defended classifier. A similar statement holds from the other side. (2) The equilibrium defense SMOOTH leads to significant gains in approximate accuracy against both FGM and PGD, in agreement with our result that SMOOTH is optimal when the decision boundaries satisfy our model.

\section{Generalization Properties of the Approximation to the Optimal Defense} \label{sec:generalization}
In the previous section, we saw how to practically approximate the optimal defense as $v_n^*$ given access to a finite number of samples drawn i.i.d. from the data distribution $p_X$. But how good is this approximation?
In this section, we derive generalization bounds showing that $\phi(v_n^*)$ approaches $\phi(v^*)$ at a fast rate w.r.t. $n$. Before proceeding, we review some necessary results from learning~theory. 

\myparagraph{Learning theory review} A function $h \colon \R^m \times \ldots \times \R^m \to \R$ is said to satisfy the \emph{bounded difference} assumption if  for all $1 \leq i \leq n$ there exists finite $c_i \in \R$ such that:\switch{\rene{You are still using $d$ for dimension. You did not change 'everywhere' \ambar{Sorry, seems that I missed this one}}}
\switch{\footnote{\color{red}RV: $d$ was defense, now it is dimension of $x$. I added $m$ as the dimension at the beginning of the paper.\ambar{Changed everywhere}}}
\begin{equation}
\sup_{x_1, x_2, \ldots, x_i, \ldots x_n, x_i' \in \R^m} |h(x_1, x_2, \ldots, x_i, \ldots, x_n) - h(x_1, x_2, \ldots, x_i', \ldots, x_n)| \leq c_i.
\end{equation}
In other words, the bounded difference assumption states that $h$ changes by at most a finite amount if any of the individual inputs are changed, while keeping all others constant. Now, let $h$ be a function satisfying the bounded difference assumption, and $X_1, \ldots, X_n \sim p_X$ be i.i.d. random variables.\switch{\rene{The input distribution is $p_X$ not $\mathcal{D}$ \ambar{Changed. You're right, generalizing is not too beneficial since we have just one application}}} Then, $h$ satisfies the following useful property called the McDiarmid's inequality, which shows that that the function values of $h$ are tightly concentrated around the mean:
\begin{equation}
\Pr\Big[|h(X_1, \ldots, X_n) - \E_{X_1, \ldots, X_n \sim p_X} h(X_1, \ldots, X_n)| > \epsilon\Big] \leq \exp\Big\{\frac{-2 \epsilon^2}{\sum_i c_i^2}\Big\}.
\end{equation}
Next we need some results from \emph{Vapnik-Chervonenkis Theory}. Let $X_1, \ldots, X_n \sim p_X$ be i.i.d. random variables each taking values in $\R^m$. Let $\B$ be a family of subsets of $\R^m$. Let $B \in \B$ \switch{\rene{$A$ is attacker. You cannot use $A$ \ambar{Changed}}}be any subset in the family. Define $\mu$ as $\mu(B) = \Pr[X_1 \in B]$. Further, given a particular realization $x_1, \ldots, x_n$ of $X_1, \ldots X_n$, define the finite-sample approximation $\mu_n$ as $\mu_n(B) = \frac{1}{n} \sum_{i = 1}^n 1[X_i \in B]$. In other words, $\mu(B)$ is the probability that a sample from $p_X$ lies in $B$, and $\mu_n(B)$ estimates this probability using $n$ samples from $\D$. Taking $h(x_1, \ldots, x_n) = \sup_{B \in \B} |\mu_n(B) - \mu(B)|$ in McDiarmid's inequality, we observe that $c_i = \frac{1}{n}$ and we get the following:
\begin{equation}
\Pr\Big[ \Big|\sup_{B \in \B} |\mu_n(B) - \mu(B)| - \E_{X_1, \ldots, X_n \sim p_X} \sup_{B \in \B} |\mu_n(B) - \mu(B)| \Big| > \epsilon\Big] \leq \exp\Big\{-2 n\epsilon^2\Big\} .
\label{mu-conc}
\end{equation}
In other words, we see that the maximum inaccuracy incurred in estimating $\Pr[X_1 \in B]$ from finite samples is tightly concentrated around the mean. The final piece we need from VC theory is an upper bound on this mean inaccuracy:
\begin{equation}
 \E_{X_1, \ldots, X_n \sim p_X} \sup_{B \in \B} \Big| \mu_n(B) - \mu(B) \Big| \leq 2 \sqrt{\frac{2\log S_\B(n)}{n}},\label{mu-ub}
\end{equation}
where $S_\B(n)$ is the shatter coefficient for the family $\B$, which is defined as follows:
\begin{align}
S_\B(n) = \sup_{x_1, x_2, \ldots, x_n \in \R^m} \Big| \Big \{\{x_1, x_2, \ldots, x_n\} \cap B \colon B \in \B \Big\} \Big|.
\end{align}
In the above, each of the terms being considered in the supremum counts the number of distinct intersections with members of $\B$. For illustration, say we are working in $\R^2$, and let $\B$ be the family of subsets generated by taking each rectangle $r$ in the plane and considering $B_r$ to be the points contained in $r$. Now given any $3$ points $\{x_1, x_2, x_3\}$ in the plane, we can find rectangles $r$ such that $\{x_1, x_2, x_3\} \cap B_r$ equals each of the 8 possible subsets $\{\}, \{x_1\}, \{x_2\}, \{x_1, x_2\}, \ldots, \{x_1, x_2, x_3\}$. This shows that $S_\B(3) \geq 8$ (which implies $S_\B(3) = 8$ as $S_\B(n) \leq 2^n$). 

In other words, the shatter coefficient at $n$ equals the largest $p$\switch{\rene{$m$ is now the dimension of the data \ambar{Changed to $p$}}} such that $n$ points can be broken into $p$ subsets by members of $\B$. Hence, $S_\B(n) \geq p$ implies there exists at least one such example of $x_1, \ldots, x_n$ that can be broken into $p$ subsets. On the other hand, $S_\B(n) < p + 1$ implies that \emph{for every} choice of $n$ points, they cannot be broken into $p + 1$ or more distinct sets by members in $\B$.

\myparagraph{Generalization bound}
We will now apply the above literature to our setup.
Going forward, we will think of $n$ as being the size of the entire training data that is available to us.
Recall that in our setting, we are given a fixed base classifier $f$. Given $f$, and a sample $x_1, \ldots, x_n$, we know the robust sets $R(x_1), R(x_2), \ldots, R(x_n)$. For each direction that can be taken by the defender, i.e., $v \in V$, we define $B_v$ to be the subset of $\X$ for which $v$ is a robust direction as:\switch{\footnote{\color{red}$X$ or $\X$? \ambar{Data space changed to $\X$ everywhere.}}}
\begin{align}
B_v = \{ x \in \X \colon v \in R(x)\}.
\end{align}
Now we can define the family of subsets $\B = \{B_v \colon v \in V\}$. With this definition, we can see that $\phi$ corresponds to $\mu$, and $\phi_n$ corresponds to $\mu_n$ as:
\begin{align}
\phi(v) &:= \mu(B_v) = \Pr[X_1 \in B_v] = \Pr[v \in R(X_1)] \\
\phi_n(v) &:= \sum_{i = 1}^n 1[v \in R(x_i)] = \mu_n(B_v) .
\end{align}
Given the base classifier $f$ and the training data, we used an optimization algorithm to obtain the maximizer of $\phi_n(v)$ in \cref{sec:approximation}. We will assume for the purposes of this section that there is no optimization error, i.e., our optimizer finds the best direction for the given training data, i.e., $v_n^* = \argmax_v \phi_n(v)$.\switch{\footnote{\color{red}$v_n^*\in \Argmax (\phi_n(v))$ \ambar{Changed}}} We are interested in the difference between $\phi(v_n^*)$ and $\phi(v^*)$ \switch{\footnote{\color{red}RV: In \cref{eq:explain} one of the two $v$'s on the right should be $v^*$, but then I am not sure about inequality. Otherwise, I think really in the first term we should substitute $v^*$ by $v$. Let's discuss this in meeting. \ambar{For the second term, define $g(v) = |\phi_n(v) - \phi(v)|$. Then Eq 17 to 18 is just saying $g(v^*_n) \leq \sup_v g(v)$}}}, i.e.:
\begin{align}
\phi(v^*) - \phi(v_n^*) 
&= \Big(\phi(v^*) - \phi_n(v_n^*)\Big) + \Big(\phi_n(v_n^*) - \phi(v_n^*)\Big) \\
&
\leq |\phi(v^*) - \phi_n(v^*)| + |\phi_n(v_n^*) - \phi(v_n^*)| \\ 
&\leq \sup_v |\phi(v) - \phi_n(v)| + \sup_v |\phi_n(v) - \phi(v)|
= 2 \sup_v |\phi(v) - \phi_n(v)|.
\end{align}%
In other words, we can get an upper bound on the quantity of interest by analysing $\sup_v |\phi(v) - \phi_n(v)|$, which is the largest inaccuracy we get due to estimating $\phi(v)$ from a finite number of samples. \eqref{mu-conc} shows that this quantity is sharply concentrated at its mean, which can be upper bounded by \eqref{mu-ub} as:
\begin{align}
 \E_{X_1, \ldots, X_n \sim p_X} \sup_{v \in V} \Big| \phi_n(v) - \phi(v) \Big| \leq 2 \sqrt{\frac{2\log S_\B(n)}{n}}.
  \label{phi-ub}
\end{align}
Hence, the problem boils down to getting an upper bound on $S_\B(n)$. For families where we can obtain a bound that is sub-exponential in $n$, we can see that the RHS converges to $0$ as $n$ becomes large. Hence, we will now upper-bound the shatter coefficient $S_\B(n)$ for our setting. 

Recall that we approximate the base classifier $f$ around each data point $x$ using a linear approximation $f_L(x') = f(x) + \nabla f(x)^\top (x' - x)$. We have seen that the robust set is the region enclosed between a half-plane and the boundary of the set $V$ (see \cref{fig:model} and \cref{lem:geometry}). 
 
\begin{figure}[h]
\centering
\includegraphics[width=\textwidth]{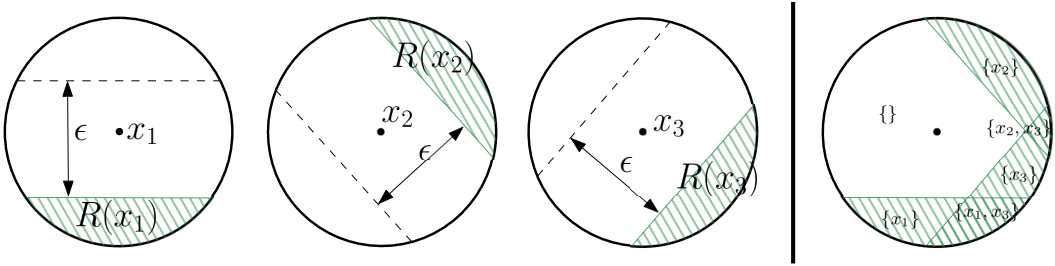}
\caption{We can compute an upper bound to $S_\B$ by looking at the different regions formed by the robust sets $R(x_i)$. The rightmost panel shows the superimposition of $R(x_1), R(x_2), R(x_3)$, showing the different subsets formed.}
\label{fig:regions}
\end{figure}

We now want to upper-bound the maximum number of different partitions of $\{x_1, x_2, \ldots, x_n\}$ that can be formed by taking subsets specified by $B_v$ for $v \in V$. Observe that overlaying all the robust sets in $V$ gives us a collection of regions, with the property that $B_v \cup \{x_1, x_2, \ldots, x_n\}$ is constant when $v$ is varied inside any region, as shown in \cref{fig:regions}.

This implies that an upper bound on $S_\B(n)$ is equal to the number of distinct regions formed. When we are in the 2-dimensional case, this is same as the number of regions formed by $n$ lines in a plane, which is known to be $(n^2 + n + 2)/2 = O(n^2)$. For higher dimensions $m$, the maximum possible number of regions grows as $O(n^m)$. Thus the upper bound given by \eqref{phi-ub} reduces to:
\begin{align}
\E_{X_1, \ldots, X_n \sim p_X} \sup_{v \in V} \Big| \phi_n(v) - \phi(v) \Big| \leq 4 \sqrt{\frac{m \log n}{n}}.
\end{align}
%
The above shows that as $n$ gets larger, i.e., we take more and more samples ($m$ is a constant), we approach the best defense at a fast rate of $O(\sqrt{\log n/n})$. This bound indicates that efficient learning from finite samples is possible. This is corroborated by our experiments, which show even faster rates. We thus believe that our generalization analysis can be sharpened by using recent advances in PAC-Bayesian learning theory, as well as modern extensions to VC-Theory. 

\section{Conclusion, Related Work and Future Directions} \label{sec:conclusion}
In this paper, we have proposed a game theoretic framework under which adversarial attacks and defenses can be studied. Under a locally linear assumption on the decision boundary of the underlying binary classifier,\switch{\rene{NN classifier, or binary classifier + locally linear boundary \ambar{Changed.}}} we identified a pair of attack and defense that exist in a Nash Equilibrium in our framework. We then gave an optimization procedure to practically compute the equilibrium defense for any given classifier, and derived generalization bounds for its performance on unseen test data. 

There has been a lot of work on the task of classification in the presence of an adversary who can make additive perturbations to the input before passing it to the classifier. There is a huge body of work on empirical attacks in the literature \cite{Miyato:ICLR16,Moosavi:SP16,Moosavi:CVPR17,Papernot:SP16,Kurakin:Arxiv16,Carlini:SP17,Papernot:Arxiv16,Chen:WAIS17,Ilyas:ICML18,Engstrom:ICML19,Su:EC19}, as well as empirically motivated work trying to mitigate the proposed attacks \cite{Miyato:ICLR16,Dziugaite:Arxiv16,Zantedeschi:WAIS17,Xu:NDSS18,Song:Arxiv17}. Here, we will focus on classical game-theoretic approaches to the problem of adversarial classification, as well as other recent defenses for which one can get theoretical guarantees on the performance under attack. 

Adversarial classification has been studied in the context of email spam detection, where we have a dataset $(\X, \mathcal{Y})$, and the adversary is allowed to modify the positive (spam) examples in the dataset (i.e., data poisoning attack) by replacing $(x, y)$ by $(x', y)$ where $y = 1$, incurring a cost of modification $c(x, x')$ according to a cost function $c$. The defender is allowed to choose a classifier $h$, which classifies any $x \in \X$ into two classes, i.e., spam or not spam. \cite{Dalvi:KDD04} studied a single shot non-zero sum game with this setup, where the defender always chooses the naïve Bayes classifier given the attacked dataset $(\X', \mathcal{Y})$. \cite{Dalvi:KDD04}  set up an integer  linear program to compute the best-response of the attacker, and give algorithms to compute the solutions efficiently. \cite{Kantarciouglu:DMKD11,Sengupta:GameSec19} have studied similar setups in a sequential setting (called a Stackelberg game), where the attacker goes first and submits the perturbed dataset $\X'$ to the defender, who then learns the classifier having observed $\X'$. \cite{Kantarciouglu:DMKD11} showed that the Stackelberg equilibrium can be approximated using optimization techniques, and provide analyses for various classification losses. \cite{Zhou:KDD12} analyzed the same setting where the classifier is now an SVM classifier, and showed that the $\min \max$ optimization problem arising from the analysis of the Nash Equilibrium of the game can be solved efficiently. \cite{Hardt:ITCS16} approached the problem from a PAC-learning perspective, showing that under certain separability assumptions on the cost function $c$ one can efficiently learn a classifier that attains low error on the attacked training set, as well as maximizes the defender's utility. 

A related line of work called adversarial hypothesis testing deals with modifications to the distribution $p_X$ from which the data is sampled instead of modifying the dataset per sample. The utility of the defender now has an additional negative term corresponding to a discrepancy function between the original $p_X$ and the modified distribution $q$. The defender now has to determine whether a given sample of $n$ points came from $p_X$ or $q$, and the defender's utility consists of a tradeoff between the Type-I and Type-II errors in this situation. \cite{Yasodharan:NIPS19} proved that mixed-strategy Nash equilibria exist in this setting, characterized them, and proved convergence properties of the classification error. \cite{Bruckner:KDD11,Bruckner:JMLR12} study Nash Equilibria for a similar setting, where the defender is now a learner who has to output the weights of a classifier that predicts which distribution the input was sampled from.

Our work differs from past literature as our defender plays an additive perturbation, instead of giving a classifier. In line with practice, we consider the base classifier fixed and provided to us to attack or defend. Additionally, we focus on the case of an attacker that can perform additive perturbations.

Finally, our work links to a recent line of work on certifiable defenses for neural networks (we refer the reader to \cite{Cohen:ICML19} for a nice review). We focus here on randomized certifiable defenses. \cite{Lecuyer:SP19} gave lower bounds on the robust accuracy of a randomized-smoothing defense via differential-privacy analyses. Subsequently, \cite{Li:NIPS19, Cohen:ICML19,Lee:NIPS19} sharpened the analysis and presented alternative techniques to obtain near-optimal smoothing guarantees for Gaussian smoothed classifiers. Our work complements these proof techniques in the literature as we use geometry of the robust sets as our primary tool to analyze the equilibria, and their optimization and generalization properties. 

There are several directions for future work. The first direction would be to extend our results to accommodate locally curved decision boundaries. We suspect that for even further extensions to base classifiers having arbitrarily complex decision boundaries, the style we use to show our generalization results would not yield useful bounds, and we would have to resort to more sophisticated tools to show convergence to the optimal defense. The second direction would be to improve the optimizer of $\phi_n$, and obtain guarantees on the optimization error. \switch{{\rene{IMHO, future work is to improve the optimizer of $\phi_n(v)$. \ambar{Added}}}}

\section*{Broader Impact}
At a high level, this work aims to provide a way to characterize adversarial attacks and defenses that might be \emph{best} for each other, in a game theoretic sense where the attacker cannot decrease the robust accuracy further when the defense is fixed, and the defender cannot increase the robust accuracy further when the attack is fixed. The technical contributions are novel geometry-flavored proof techniques that can be used to analyze provable attacks and defenses, and a game-theoretic framework to study such equilibria. Machine learning systems are increasingly being used in security-critical applications, like healthcare and automated driving: our work can be used to find guarantees on the worst accuracy a defended classifier can have under any attack. This is a step towards safe machine learning, where the ultimate goal is to be able to construct classifiers whose performance cannot be degraded by an adversary on most data-points with high probability. 

\begin{ack}
This work was supported by DARPA Grant HR00112020010 and NSF Grant 1934979. 
\end{ack}

\bibliographystyle{plain}
\bibliography{../biblio/learning}

\newpage
\appendix
\section{Proof of \cref{lem:geometry}}
\begin{figure}[h]
\centering
\includegraphics[height=4cm]{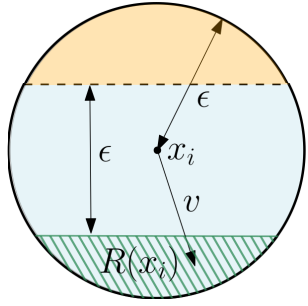}
\caption{\label{fig:robust-set}
Geometry of the Robust Set $R(x_i)$ is shown by the shaded region. The linear approximation around $x_i$, i.e.  $f_L$, is shown by the dotted line. The orange and the blue regions are the two classification regions defined by $f_L$. The perturbation budget is $\epsilon$.}
\end{figure}
\geometry*
\begin{proof} Recall from \cref{def:robust-set} that the robust set is the set of all directions $v_d$ that the defender $D$ can play at a point $x_i \in \X$ such that no matter what (deterministic) action $v_a$ the attacker $A$ plays, it will always have the utility $-1$, i.e. $u_D(x_i, v_d, v_a) = -1$ for all $v_a \in V$:
\begin{equation*}
R(x_i) = \{ v \colon v \in V \ s.t. \  \forall v' \in V \ u_A(x_i, v', v) = -1\}
\end{equation*}

Observe that whenever $x_i + v_d$ lies at a distance more than $\epsilon$ from the linear approximation $f_L$ around $x_i$, we have $sgn(f_L(x_i + v_d + v)) = sgn(f_L(x_i + v_d)) \ \forall v \in B(0, \epsilon)$. Hence, no matter what direction $v_a$ the attacker plays, she always gets a utility of $-1$. This shows that $\text{dist}(x_i + v_d, L) \geq \epsilon$ is a sufficient condition for $v_d \in R(x_i)$ given that $v_d$ lies on the same side of $f_L$ as $x_i$, i.e. $\text{sgn} f_L(x_i + v_d) = \text{sgn} f_L(x_i)$. If $v_d$ lies on the opposite side of $f_L$ as $x_i$, then $v_d \not \in R(x_i)$ trivially as the attacker can play $v_a = 0$ to get $\text{sgn} f_L(x_i + v_d + v_a) \neq \text{sgn} f_L(x_i)$ and thus $+1$ utility. 

The above paragraph showing sufficiency of $\text{dist}(x_i + v_d, L) \geq \epsilon$ does not need any restriction on the decision boundary. However, the locally linear model additionally gives us the necessity of the distance condition, as for any $v_d$ played by the defender with $\text{dist}(x_i + v_d, L) < \epsilon$, the attacker can take $v_a$ to be the FGM direction (i.e. perpendicular to $f_L$ towards the other side of the decision boundary as $x + v_d$) to obtain $\text{sgn} f_L(x_i + v_d + v_a) \neq \text{sgn} f_L(x_i)$, and thus get a $+1$ utility. This shows that $\text{dist}(x_i + v_d, L) \geq \epsilon$ is a necessary condition for $v_d \in R(x_i)$. We have thus shown that the following condition is neccessary and sufficient for $v_d$ to belong to the robust set $R(x_i)$:
\begin{center}
(1) The perturbed point staying at least $\epsilon$ away from the boundary, i.e. $\text{dist}(x_i + v, L) \geq \epsilon$ AND \\
(2) the label staying unchanged, i.e. $\text{sgn} f_L(x_i + v) = \text{sgn} f_L(x_i)$
\end{center}
The geometry of the problem is shown in \cref{fig:robust-set}. As $\text{dist}(x_i + v, L) = \frac{|f_L(x_i + v)|}{\|\nabla f(x_i)\|_2}$, the first condition gives us $|f_L(x_i + v)| - \epsilon \|\nabla f(x_i)\|_2 \geq 0$. The second condition gives us $|f_L(x_i + v)| = \text{sgn}(f(x_i)) f_L(x_i + v)$. Since $f_L(x_i + v) = f(x_i) + \nabla f(x_i)^\top v$, we get the equivalent condition:
\begin{equation*}
\text{sgn}(f(x_i))(f(x_i) + \nabla f(x_i)^\top v) - \epsilon \|\nabla f(x_i)\| \geq 0 \qedhere
\end{equation*}
\end{proof}

The above proof can also be expressed in short by tailoring all parts to our locally linear approximation:
\begin{align*}
R(x) &= \{ v \in B(0, \epsilon) \colon sgn(f(x)) (f(x) + \nabla f(x)^\top (v + v_a)) \geq 0 \ \forall v_a \in B(0, \epsilon) \} \\
&= \{ v \in B(0, \epsilon) \colon \min_{v_a \in B(0, \epsilon)} sgn(f(x)) (f(x) + \nabla f(x)^\top (v + v_a)) \geq 0 \} \\
&= \{v \in B(0, \epsilon) \colon sgn(f(x)) (f(x) + \nabla f(x)^\top v)  - \epsilon \|\nabla f(x)\| \geq 0\}
\end{align*}

\section{Proof of \cref{lem:fgsm-br-cont}}
\fgsm*
\begin{proof}
Let the (possibly randomized) strategies played by the attacker $A$ and the defender $D$ be $s_A$ and $s_D$ respectively. The utility obtained by the attacker is $\bar u_A(s_A, s_D)$: 
\begin{align}
\bar u_A(s_A, s_D) &= \E_{x \sim p_X, a \sim s_A, d \sim s_D} u_A(x, a(x), d(x)) \\
&= \E_{x \sim p_X, d \sim s_D} \E_{a \sim s_A} \Big[u_A(x, a(x), d(x)) \Big| x, d \Big] \\
&= \int_X \int_{\A_D} \left( \int_{\A_A} u_A(x, a(x), d(x)) \ p(a) \ da \right) p(d) \ p(x) \ dd \ dx \label{eq:full-integral} \\
&\leq \E_{x \sim p_X, d \sim s_D} \sup_{a \in \A_A} u_A(x, a(x), d(x)) \quad \text{(Property of convex combination)} \label{eq:upper-bd-cont}
\end{align}
In the above, we have used the fact that $p(a, d, x)$ factorizes as $p(a) p(d) p(x)$, due to the way our game is played. Recall that for each $x \in X$, we are taking the approximate decision boundary to be the zero-contour of a linear approximation of $f$ around $x$, i.e. $f_L(x') = f(x) + \nabla f(x)^\top (x' - x)$. Additionally, by the definition of $R(x)$, we have the following for all $x \in X, d \in \PP(\A_D)$:
\begin{equation} \label{eq:sup-property}
\sup_{a \in \A_A} u_A(x, a(x), d(x)) = 
	\begin{cases}
		-1 \quad \text{ if } d(x) \in R(x) \\
		+1 \quad \text{ otherwise }
	\end{cases}
\end{equation}
Taking the underlying sample-space to be $\Omega = X \times \A_D$, and the associated joint probability distribution over this space to be $p_X \times s_D$, we define the event $E = \{(x, d) \colon (x, d) \in \Omega, \ d(x) \in R(x)\}$. $\bar E$ is defined to be the complement of $E$. From \cref{eq:sup-property}, we see that:
\begin{align}
\E_{x \sim p_X, d \sim s_D} \sup_{a \in \A_A} u_A(x, a(x), d(x))
&= \Pr [\bar E] - \Pr [E]
\end{align}
Now, we will use simple geometry to see that the FGM direction always achieves the upper-bound obtained in \cref{eq:upper-bd-cont}. Recall that the FGM strategy is a deterministic strategy, which plays the funtion $a_\text{FGM}$ with probability 1 such that the distribution induced on $(X, V)$ has its entire mass on $a_\text{FGM}(x) = -\epsilon \frac{{\rm sgn} (f(x))}{\| \nabla f(x) \|_2} \nabla f (x)$ for all $x \in X$. Following the same steps as above till \cref{eq:full-integral}, we have:
\begin{align}
\bar u_A(s_\text{FGM}, s_D) 
&= \int_X \int_{\A_D} \left( \int_{\A_A} u_A(x, a(x), d(x)) \ p(a | d, x) \ da \right) \ p(d) \ p(x) \ dd \ dx \nonumber \\
&= \int_X \int_{\A_D} u_A (x, a_\text{FGM}(x), d(x)) \ p(d) \ p(x) \ dd \ dx \label{eq:integral-fgsm}
\end{align}

When $d(x) \in R(x)$, all attacker directions lead to an utility of $-1$ for the attacker, hence so does the FGM direction, i.e. $u_A (x, a_\text{FGM}(x), d(x)) = -1$. 

When $d(x) \not \in R(x)$, then we claim that $a_\text{FGM}(x)$ is a direction such that ${\rm sgn} f_L(x + d(x) + a_\text{FGM}(x)) \neq {\rm sgn} f_L(x))$. This can be seen by observing that the point closest to $x + d(x)$ on the decision boundary is the first boundary point we hit by moving towards the boundary in a direction perpendicular to it. For the assumed linear boundary $f_L(x) = 0$, this direction is given by $\frac{- {\rm sgn} (f(x))}{\| \nabla f(x) \|_2} \nabla f (x)$. Since $d(x) \not \in R(x)$, there is atleast one vector $v_a \in V$ such that ${\rm sgn} f(x + d(x) + v_a) \neq {\rm sgn} f(x + d(x))$. This implies that one can rotate $v_a$ towards the ray $\{x + k \cdot \frac{- {\rm sgn} (f(x))}{\| \nabla f(x) \|_2} \nabla f (x) \colon k \geq 0\}$ to maintain the sign difference. Since the vector obtained on completing this rotation is exactly the FGM direction $a_\text{FGM}(x)$, we are done. Hence, we have shown that when $d(x) \not \in R(x)$, then $u_A (x, a_\text{FGM}(x), d(x)) = +1$. 

Continuing from \cref{eq:integral-fgsm}, we have:
\begin{align}
&\int_X \int_{\A_D} u_A (x, a_\text{FGM}(x), d(x)) \ p(d) \ p(x) \ dd \ dx \nonumber \\
&= \int_X \int_{\A_D} \Big((+1) \cdot \mathbb{I}[d(x) \in R(x)]  + (-1) \cdot \mathbb{I}[d(x) \not \in R(x)] \Big) \ p(d) \ p(x) \ dd \ dx \\
&= \Pr [\bar E] - \Pr [E]
\end{align}

This shows that for all strategies $s_A \in \PP(\A_A), s_D \in \PP(\A_D)$ played by $A$, $D$ respectively, we have:
\[\bar u_A(s_{\text{FGM}}, s_D) \geq \bar u_A(s_A, s_D) \qedhere\]
\end{proof}

\section{Proof of \cref{lem:smoothing-br-cont}}
\smooth*
\begin{proof}
Let $s_D$ be the strategy followed by $D$, specified by the distribution $p_D \in \PP(\A_D)$. Recall that the attackers strategy $s_\text{FGM}$ plays the function $a_\text{FGM}$ with probability $1$ such that the distribution induced on $(X, V)$ has its entire mass on $a_\text{FGM}(x) = -\epsilon \frac{{\rm sgn} (f(x))}{\| \nabla f(x) \|_2} \nabla f (x)$ for all $x \in X$. The defender's utility can be written as follows:
\begin{align}
\bar u_D(s_\text{FGM}, s_D) 
&= \E_{x \sim p_X, d \sim p_D} u_D(x, a_\text{FGM}(x), d(x)) \\
&= \int_X \int_{\A_D} u_D(x, a_\text{FGM}(x), d(x)) \ p_D(d) \ p_X(x) \ dd \  dx \label{eq:improve-cont}
\end{align}

Following the proof of \cref{lem:fgsm-br-cont}, we can see that under the FGM attack, the defender gets a utility of $+1$ at the point $x \in X$ when he plays from the robust-set $R(x)$, i.e. for a sample $d \sim p_D$ we have $d(x) \in R(x)$, and a utility of $-1$ otherwise. Accordingly, we now first split the domain in \cref{eq:improve-cont} into two parts depending on whether the defender plays a direction in the robust-set:
\begin{align}
\bar u_D(s_\text{FGM}, s_D) 
&= \int_X \int_{\A_D} u_D(x, a_\text{FGM}(x), d(x)) \mathbb{I}[d(x) \in R(x)] \ p_D(d) \ p_X(x) \ dd \  dx + \nonumber \\
&\quad \int_X \int_{\A_D} u_D(x, a_\text{FGM}(x), d(x)) \mathbb{I}[d(x) \not \in R(x)] \ p_D(d) \ p_X(x) \ dd \  dx \\
&= \int_X \int_{\A_D} (+1) \mathbb{I}[d(x) \in R(x)] \ p_D(d) \ p_X(x) \ dd \  dx + \nonumber \\
&\quad \int_X \int_{\A_D} (-1) \mathbb{I}[d(x) \not \in R(x)] \ p_D(d) \ p_X(x) \ dd \  dx \\
&= \int_{\A_D} \int_X (+1) \mathbb{I}[d(x) \in R(x)] \ p_D(d) \ p_X(x) \ dx \  dd + \nonumber \\
&\quad \int_{\A_D} \int_X (-1) \mathbb{I}[d(x) \not \in R(x)] \ p_D(d) \ p_X(x) \ dx \  dd \label{eq:interchange-cont}
\end{align}

The order of integration could be interchanged in \cref{eq:interchange-cont} since the double integral of the absolute value of the integrand is finite (Fubini's Theorem). We now appeal to the structure of $\A_D$, and recall that $\A_D$ consists of all constant functions from $X$ to $V$. For a particular element $d \in \A_D$, let $v_d$ be its output such that $\forall x \in X \ d(x) = v_d$. Further, we note from the definition of $\phi(v)$ that $\int_X \mathbb{I}[v \not \in R(x)] p_X(x) dx = 1 - \phi(v)$. Continuing from \cref{eq:interchange-cont}:
\begin{align}
\bar u_D(s_\text{FGM}, s_D) 
&= \int_{\A_D} \phi(v_d) \ p_D(d) \ dd - \int_{\A_D} \int_X \mathbb{I}[d(x) \not \in R(x)] \ p_D(d) \ p_X(x) \ dx \  dd \\
&= \int_{\A_D} \phi(v_d) \ p_D(d) \ dd - \int_{\A_D} (1 - \phi(v_d)) \ p_D(d) \ dd \\
&= \int_{\A_D} (2 \phi(v_d) - 1) \ p_D(d) \ dd \label{eq:defense-cont} \\
&\leq 2 \phi(v^*) - 1 \quad \text{ (By property of convex combination for any $v^* \in V^*$) } \label{eq:opt-defense-cont}
\end{align}
Finally, we observe that $s_\text{SMOOTH}$ achieves the upper-bound obtained in \cref{eq:opt-defense-cont}. Let $p$ be the density for the uniform distribution over the set $F^*$. Following the same steps as above till \cref{eq:opt-defense-cont}, we will get the following:
\begin{align}
\bar u_D(s_\text{FGM}, s_\text{SMOOTH}) 
&= \int_{\A_D} (2 \phi(v_d) - 1) \ p(d) \ dd \\
&= \int_{F^*} (2 \phi(v_d) - 1) \ p(d) \ dd \\
&= (2 \phi(v^*) - 1) \int_{F^*} p(d) \ dd \\
&= (2 \phi(v^*) - 1)
\end{align}

Hence, we have have shown that $u_D(s_\text{FGM}, s_\text{SMOOTH}) \geq u_D(s_\text{FGM}, s_D)$ for all $s_D \in \PP(\A_D)$. 
\end{proof}

\section{Details for Experiments}

\begin{table}[h]
  \caption{Attacks and Defenses for MNIST $0$ vs $1$. The FGM attack and SMOOTH defense correspond to $s_\text{FGM}$ and $s_\text{SMOOTH}$ respectively. The PGD attack \cite{Madry:ICLR18} is an iterated version of FGM.
True Accuracy shows accuracies using the true classifier $f$ and Approximate Accuracy shows accuracies according to the locally linear approximation $f_L$. Detailed descriptions can be found in the Appendix.}
  \label{tab:resultsapp}
  \centering
  \begin{tabular}{@{\,}c@{\;\;}c@{\;\;}c@{\;\;}c@{\,}}
    \toprule
    Attack & Defense & True Accuracy (\%) & Approximate Accuracy (\%) \\
    \midrule
    - & - & 99.9 & 99.9 \\
    FGM & - & 63.0 & 48.3\\
    FGM & SMOOTH & 95.6 & 94.5\\
    PGD & - & 47.7 & 85.6 \\
    PGD & SMOOTH & 75.1 & 99.1\\
    \bottomrule
  \end{tabular}
\end{table}
\cref{tab:resultsapp} shows results for a particular binary classification task ($0$ vs $1$) on the MNIST dataset. The \cref{tab:results} in the main text reports summary statistics for this table over all possible pairs on MNIST and FMNIST. A particular peculiarity is that the network when attacked with PGD has a better \emph{approximate accuracy} than the network attacked with FGM, whereas we know that PGD is a stronger attack than FGM. This happens due to the fact that approximate accuracy is defined on a linearized model (see description in \cref{tab:resultsappext}), whereas PGD observes gradients for the original model, leading to a bad attack performance when the evaluation is made according to the linearized model.

\begin{table}[h]
\caption{Attacks and Defenses for MNIST $0$ vs $1$. This is an expanded version of \cref{tab:resultsapp}. \label{tab:resultsappext}}
  \centering
  \begin{tabular}{Sc Sc Sc Sc Sc}
    \toprule
    Attack & Defense & Accuracy (\%) & Description & \\
    \midrule
    - & - & 99.9 & $\frac{1}{n} \sum_{i = 1}^{n} \mathbf{1}[sgn f(x_i) = y_i]$ & \multirow{5}{*}{True} \\
    FGM & - & 63.0 & $\frac{1}{n} \sum_{i = 1}^{n} \mathbf{1}[sgn f(x_i + v_a) = y_i]$ & \\
    FGM & SMOOTH & 95.6 & $\frac{1}{n} \sum_{i = 1}^{n} \mathbf{1}[sgn f(x_i + a_\text{FGM}(x_i) + v_n^*) = y_i]$ & \\
    PGD & - & 47.7 & $\frac{1}{n} \sum_{i = 1}^{n} \mathbf{1}[sgn f(x_i + a_\text{PGD}(x_i)) = y_i]$ & \\
    PGD & SMOOTH & 75.1 & $\frac{1}{n} \sum_{i = 1}^{n} \mathbf{1}[sgn f(x_i + a_\text{PGD}(x_i) + v_n^*) = y_i]$ & \\
    \midrule
    - & - & 99.9 & $\frac{1}{n} \sum_{i = 1}^{n} \mathbf{1}[sgn f_L(x_i) = y_i]$ & \multirow{5}{*}{Approximate} \\
    FGM & - & 48.3 & $\frac{1}{n} \sum_{i = 1}^{n} \mathbf{1}[sgn f_L(x_i + a_\text{FGM}(x_i)) = f_L(x_i)]$ & \\
    FGM & SMOOTH & 94.5 & $\frac{1}{n} \sum_{i = 1}^{n} \mathbf{1}[sgn f_L(x_i + a_\text{FGM}(x_i) + v_n^*) = f_L(x_i)]$ & \\
    PGD & - & 85.6 & $\frac{1}{n} \sum_{i = 1}^{n} \mathbf{1}[sgn f_L(x_i + a_\text{PGD}(x_i)) = f_L(x_i)]$ & \\
    PGD & SMOOTH & 99.1 & $\frac{1}{n} \sum_{i = 1}^{n} \mathbf{1}[sgn f_L(x_i + a_\text{PGD}(x_i) + v_n^*) = f_L(x_i)]$ & \\
    \bottomrule
\end{tabular}
\end{table}

\cref{tab:resultsappext} is an expanded version of \cref{tab:resultsapp},  where the Description column has been added which shows how the accuracies have been computed. $v_n^*$ is obtained by following the optimization procedure mentioned in \cref{sec:approximation}. $a_\text{PGD}(x_i)$ is obtained by repeatedly applying FGM and projecting to the set of allowed perturbations $V$, i.e. we iterate the following steps $10$ times for each test point $x_i$ to obtain $p_1, p_2, \ldots, p_{10}$, starting with $p_0 = x_i$:
\begin{enumerate}
\item Perturb current iterate $p_{j}$: $p'_j \gets p_{j} + a_\text{FGM}(p_{j})$
\item Project perturbation to $B(x_i, \epsilon)$: $p_{j + 1} \gets x_i + \epsilon \frac{p'_j - x_i}{\|p'_j - x_i\|}$
\end{enumerate}
At the end we get $a_\text{PGD}(x_i) = p_{10} - x_i$.

\section{Validity of Modelling Assumptions}
While the assumption of local-linearity might seem strong at first, there is ample empirical evidence of its validity for neural networks. Fig.~\ref{fig:church-plot}, reproduced from \cite{Warde-Farley:POS16} shows the
decision boundaries of a CNN trained on CIFAR-10 in a $\epsilon$ neighbourhood of many randomly-selected images, where white denotes the predicted class and other shades denote other classes. It can be seen that, locally, the boundary is approximately linear. 
This \emph{linearity hypothesis} was proposed in \cite{Goodfellow:ICLR15} and further explored in \cite{Warde-Farley:POS16} (showing empirical evidence) and \cite{Dezfooli:ICLR18} (linking to the existence of universal adversarial perturbations). Recent studies \cite{Lee:Arxiv19}, \cite{Qin:NIPS19} improve Deep Neural Networks' robustness by promoting local-linearity. Hence, we stress that our modelling assumptions \emph{do} partially hold for modern real-world classifiers, and we are not limited to  just linear classifiers.

\begin{figure}[h]
\centering
\includegraphics[width=0.5\linewidth]{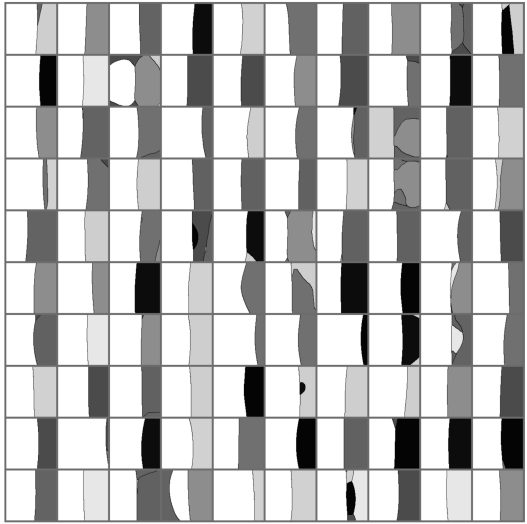}
\caption{Church-Window plots for a CNN $f$ reproduced from Fig.~11.2 of \cite{Warde-Farley:POS16}. Each plot shows $f(\mathbf{x} + a\mathbf{u} + b\mathbf{v})$ for $a, b \in [-\epsilon, \epsilon]$, where $\mathbf{u}$ is the FGM direction, $\mathbf{v}$ is a random direction orthogonal to $\mathbf{u}$ and $\mathbf{x}$ is a random data-point from CIFAR-10. White denotes the class $f(\mathbf{x})$, and other shades denote other classes.}
\label{fig:church-plot}
\end{figure}

\end{document}